\definecolor {processblue}{cmyk}{0.96,0,0,0}
\newcolumntype{d}[1]{D..{#1}}
  \providecommand\BibTeX{{%
    \normalfont B\kern-0.5em{\scshape i\kern-0.25em b}\kern-0.8em\TeX}}}
\begin{document}

%%
%% The "title" command has an optional parameter,
%% allowing the author to define a "short title" to be used in page headers.
\title[Robust Hierarchical Graph Classification with Subgraph Attention]{Robust Hierarchical Graph Classification with \\Subgraph Attention}

%%
%% The "author" command and its associated commands are used to define
%% the authors and their affiliations.
%% Of note is the shared affiliation of the first two authors, and the
%% "authornote" and "authornotemark" commands
%% used to denote shared contribution to the research.
% \author{Ben Trovato}
% \authornote{Both authors contributed equally to this research.}
% \email{trovato@corporation.com}
% \orcid{1234-5678-9012}
% \author{G.K.M. Tobin}
% \authornotemark[1]
% \email{webmaster@marysville-ohio.com}
% \affiliation{%
%   \institution{Institute for Clarity in Documentation}
%   \streetaddress{P.O. Box 1212}
%   \city{Dublin}
%   \state{Ohio}
%   \postcode{43017-6221}
% }

\author{Sambaran Bandyopadhyay}
\affiliation{%
  \institution{IBM Research AI \& Indian Institute of Science}
  \city{Bangalore, India}}
\email{samb.bandyo@gmail.com}

\author{Manasvi Aggarwal}
\affiliation{%
  \institution{Indian Institute of Science}
  \city{Bangalore, India}}
\email{manasvia@iisc.ac.in}

\author{M. Narasimha Murty}
\affiliation{%
  \institution{Indian Institute of Science}
  \city{Bangalore, India}}
\email{mnm@iisc.ac.in}

% \author{Huifen Chan}
% \affiliation{%
%   \institution{Tsinghua University}
%   \streetaddress{30 Shuangqing Rd}
%   \city{Haidian Qu}
%   \state{Beijing Shi}
%   \country{China}}

% \author{Charles Palmer}
% \affiliation{%
%   \institution{Palmer Research Laboratories}
%   \streetaddress{8600 Datapoint Drive}
%   \city{San Antonio}
%   \state{Texas}
%   \postcode{78229}}
% \email{cpalmer@prl.com}

% \author{John Smith}
% \affiliation{\institution{The Th{\o}rv{\"a}ld Group}}
% \email{jsmith@affiliation.org}

% \author{Julius P. Kumquat}
% \affiliation{\institution{The Kumquat Consortium}}
% \email{jpkumquat@consortium.net}

% %%
% %% By default, the full list of authors will be used in the page
% %% headers. Often, this list is too long, and will overlap
% %% other information printed in the page headers. This command allows
% %% the author to define a more concise list
% %% of authors' names for this purpose.
% \renewcommand{\shortauthors}{Trovato and Tobin, et al.}

%%
%% The abstract is a short summary of the work to be presented in the
%% article.
\begin{abstract}
Graph neural networks get significant attention for graph representation and classification in machine learning community. Attention mechanism applied on the neighborhood of a node improves the performance of graph neural networks. Typically, it helps to identify a neighbor node which plays more important role to determine the label of the node under consideration. But in real world scenarios, a particular subset of nodes together, but not the individual pairs in the subset, may be important to determine the label of the graph. To address this problem, we introduce the concept of subgraph attention for graphs. On the other hand, hierarchical graph pooling has been shown to be promising in recent literature. But due to noisy hierarchical structure of real world graphs, not all the hierarchies of a graph play equal role for graph classification. Towards this end, we propose a graph classification algorithm called SubGattPool which jointly learns the subgraph attention and employs two different types of hierarchical attention mechanisms to find the important nodes in a hierarchy and the importance of individual hierarchies in a graph. Experimental evaluation with different types of graph classification algorithms shows that SubGattPool is able to improve the state-of-the-art or remains competitive on multiple publicly available graph classification datasets. We conduct further experiments on both synthetic and real world graph datasets to justify the usefulness of different components of SubGattPool and to show its consistent performance on other downstream tasks.
\end{abstract}

%%
%% The code below is generated by the tool at http://dl.acm.org/ccs.cfm.
%% Please copy and paste the code instead of the example below.
%%
\begin{CCSXML}
<ccs2012>
 <concept>
  <concept_id>10010520.10010553.10010562</concept_id>
  <concept_desc>Computer systems organization~Embedded systems</concept_desc>
  <concept_significance>500</concept_significance>
 </concept>
 <concept>
  <concept_id>10010520.10010575.10010755</concept_id>
  <concept_desc>Computer systems organization~Redundancy</concept_desc>
  <concept_significance>300</concept_significance>
 </concept>
 <concept>
  <concept_id>10010520.10010553.10010554</concept_id>
  <concept_desc>Computer systems organization~Robotics</concept_desc>
  <concept_significance>100</concept_significance>
 </concept>
 <concept>
  <concept_id>10003033.10003083.10003095</concept_id>
  <concept_desc>Networks~Network reliability</concept_desc>
  <concept_significance>100</concept_significance>
 </concept>
</ccs2012>
\end{CCSXML}

\ccsdesc[500]{Computer systems organization~Embedded systems}
\ccsdesc[300]{Computer systems organization~Redundancy}
\ccsdesc{Computer systems organization~Robotics}
\ccsdesc[100]{Networks~Network reliability}

%%
%% Keywords. The author(s) should pick words that accurately describe
%% the work being presented. Separate the keywords with commas.
\keywords{datasets, neural networks, gaze detection, text tagging}

%%
%% This command processes the author and affiliation and title
%% information and builds the first part of the formatted document.
\maketitle

\section{Introduction}\label{sec:intro}
Graphs are the most suitable way to represent different types of relational data such as social networks, protein interactions and molecular structures. Typically, A graph is represented by $G = (V,E)$, where $V$ is the set of nodes and $E$ is the set of edges. Further, each node $v_i \in V$ is also associated with an attribute (or feature) vector $x_i \in \mathbb{R}^D$.
Recent advent of deep representation learning has heavily influenced the field of graphs. Graph neural networks (GNNs) \cite{defferrard2016convolutional,xu2018how} are developed to use the underlying graph as a computational graph and aggregate node attributes from the neighbors of a node to generate the node embeddings \cite{kipf2016semi}. A simplistic message passing framework \cite{gilmer2017neural} for graph neural networks can be presented by the following equations. 
\begin{flalign}\label{eq:generalGNN}
    &h_v^l = COMBINE^l\Bigg( \Big\{ h_v^{l-1}, AGGREGATE^k \big( \{h_{v'}^{l-1} : v' \in \mathcal{N}_G(v) \} \big) \Big\}\Bigg) \nonumber \\
    &\;\;\;\;\;\;\;\;\;\;\;x^{G} = READOUT\Big( \Big\{ h_v^L : v \in V(G) \Big\} \Big)
\end{flalign}
Here, $h_v^l$ is the representation of node $v$ of graph $G$ in $l$-th layer of the GNN. The function $AGGREGATE$ considers representation of the neighboring nodes of $v$ from the $(l-1)$th layer of the GNN and maps them into a single vector representation. As neighbors of a node do not have any ordering in a graph and the number of neighbors can vary for different nodes, $AGGREGATE$ function needs to be permutation invariant and should be able to handle different number of nodes as input. Then, $COMBINE$ function uses the node representation of $v$th node from $(l-1)$the layer of GNN and the aggregated information from the neighbors to obtain an updated representation of the node $v$. Finally for the graph level tasks, $READOUT$ function (also known as graph pooling) generates a summary representation $x^G$ for the whole graph $G$ from all the node representations $h_v^L$, $\forall v \in V(G)$ from the final layer (L) of GNN. Similar to $AGGREGATE$, the function $READOUT$ also needs to be invariant to different node permutations of the input graph, and should be able to handle graphs with different number of nodes.

In the existing literature, different types of neural architectures are proposed to implement each of the three functions mentioned in Equation \ref{eq:generalGNN}. For example, GraphSAGE \cite{hamilton2017inductive} implements 3 different variants of the $AGGREGATE$ function with mean, maxpool and LSTM respectively. 
For a graph level task such as graph classification \cite{xu2018how,duvenaud2015convolutional}, GNNs jointly derive the node embeddings and use different pooling mechanisms \cite{ying2018hierarchical,lee2019self} to obtain a representation of the entire graph.
Recently, attention mechanisms on graphs show promising results for both node classification \cite{velivckovic2017graph} and graph classification \cite{lee2019self,lee2018graph} tasks. There are different ways to compute attention mechanisms on graph. \cite{velivckovic2017graph} compute attention between a pair of nodes in the immediate neighborhood to capture the importance of a node on the embedding of the other node by learning an attention vector. \cite{lee2018graph} compute attention between a pair of nodes in the neighborhood to guide the direction of a random walk in the graph for graph classification. 
\cite{lee2019self} propose self attention pooling of the nodes which is then used to capture the importance of the node to generate the label of the entire graph.

Most of the attention mechanisms developed in graph literature use attention to derive the importance of a node or a pair of nodes for different tasks. But in real world situation, calculating importance up to a pair of nodes is not adequate.
In molecular biology or in social networks, the presence of particular sub-structures (a subset of nodes with their connections and features), potentially of varying sizes, in a graph often determines its label. Hence, all the nodes collectively in such a substructure are important, and they may not be important individually or in pairs to classify the graph. 
In Figure \ref{fig:exam_graph}, each node (indexed from $a$ to $g$) in the small synthetic graph can be considered as an agent  whose attributes determine its opinion (1:positive, 0: neutral, -1: negative) about 4 products. Suppose the graph can be labelled +1 only if there is a subset of connected (by edges) agents who jointly have positive opinion about all the product. In this case, the blue shaded connected subgraph $(a,b,c)$ is important to determine the label of the graph. Please note, attention over the pairs \cite{velivckovic2017graph} is not enough as $(a,b)$ cannot make the label of the graph +1 by itself. Also, multiple layers of graph convolution \cite{kipf2016semi} with pair-wise attention may not work as the aggregated features of a node get corrupted after the feature aggregation by the first few convolution layers.
Besides, recent literature also shows that higher order GNNs that directly aggregate features from higher order neighborhood of a node are theoretically more powerful than 1st order GNNs \cite{morris2019weisfeiler}.
With these motivations, we develop a novel higher order attention mechanism in the graph which operates in the subgraph level in the vicinity of a node. We call it subgraph attention mechanism and use it for graph classification\footnote{Subgraph attention can easily be applied for node classification as well. But we focus only on graph classification in this paper.}.

% \textbf{Node classification}: Given a graph $G=(V,E)$ with each node $v_i$ associated with an attribute vector $x_i \in \mathbb{R}^D$, and a subset of nodes $V_s \subseteq V$ with each node $v_i \in V_s$ labelled with $y_i \in \mathcal{L}_{n}$ (set of discrete labels for the nodes of the graph), the task is to predict the label of a node $v_j \in V_u = V \setminus V_s$ using the structure and the node attributes of the entire graph and the node labels from $V_s$.
% %and the node attributes. 
% Essentially, this leads to learning a function $f_{n} : V \mapsto \mathcal{L}_{n}$ for the given graph $G$. 
% %Here, $\mathcal{L}_{n}$ is the set of discrete labels for the nodes of the graph.

On the other hand, different types of graph pooling (i.e., $READOUT$ function in Equation \ref{eq:generalGNN}) mechanisms \cite{duvenaud2015convolutional,gilmer2017neural,morris2019weisfeiler} have been proposed in the recent GNN literature. Simple functions such as taking sum or mean of all the node representations to compute the graph-level representation are studied in \cite{duvenaud2015convolutional}.
Recently, hierarchical graph pooling \cite{ying2018hierarchical,morris2019weisfeiler} gains significant interest as it is able to capture the intrinsic hierarchical structure of several real-world graphs. For e.g., in a social network, one must model both the ego-networks around individual nodes, as well as the coarse-grained relationships between entire communities \cite{newman2003structure}. Instead of directly obtaining a graph level summary vector, hierarchical pooling mechanisms recursively converts the input graph to graphs with smaller sizes. But hierarchical representation \cite{ying2018hierarchical} often fails to perform well in practice mainly due to two major shortcomings. First, there is significant loss of information in learning the sequential hierarchies of a graph when the data is limited. Second, it treats all the nodes within a hierarchy, and all the hierarchies equally while computing the entire graph representation. But for some real-world graphs, the structure between the sub-communities may be more important than that between the nodes or the communities to determine the label of the entire graph \cite{newman2003structure}. Moreover, due to presence of noise, some of the discovered hierarchies may not follow the actual hierarchical structure of the graph \cite{sun2017breaking}, and can negatively impact the overall graph representation. To address these issues, we again use attention to differentiate different units of a hierarchical graph representation in a GNN framework. Thus, our contributions in this paper are multifold, as follows:
\begin{itemize}
    \item We propose a novel higher order attention mechanism (called \textit{subgraph attention}) for graph neural networks, which is based on the importance of a subgraph of dynamic size to determine the label of the graph.
    \item We also propose hierarchical attentions in graph representation. More precisely, we propose \textit{intra-level} and \textit{inter level} attention which respectively find important nodes within a hierarchy and important hierarchies of the hierarchical representation of the graph. This enables the overall architecture to minimize the loss of information in the hierarchical learning and to achieve robust performance on real world noisy graph datasets.
    \item We propose a novel neural network architecture \textit{SubGattPool} (\textul{Sub}-\textul{G}raph \textul{att}ention network with hierarchically attentive graph \textul{Pool}ing) to combine the above two ideas for graph classification. Thorough experimentation on both real world and synthetic graphs shows the merit of the proposed algorithms over the state-of-the-arts. %The source code of SubGattPool is made available at \url{https://bit.ly/38kRVxj} to ease the reproducibility of the results.
\end{itemize}

\begin{figure}
    \centering
    \includegraphics[width=0.7\columnwidth]{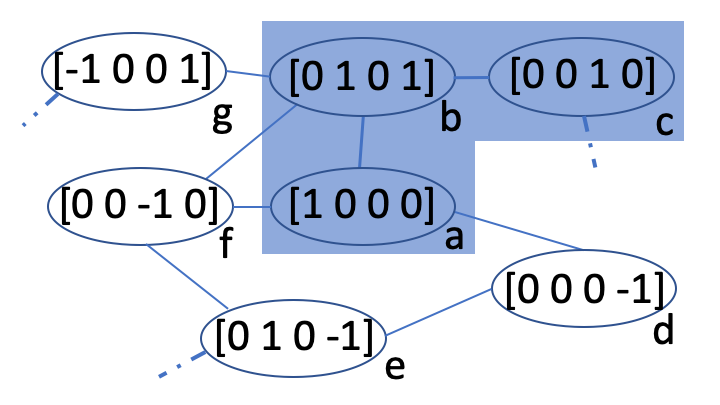}
    \caption{Example to motivate subgraph attention}
  \label{fig:exam_graph}
\end{figure}

\section{Related Work and the Research Gaps}\label{sec:related}
A survey on network representation \cite{grover2016node2vec,bandyopadhyay2018fscnmf} learning and graph neural networks can be found in \cite{wu2019comprehensive}. For the interest of space, we briefly discuss some more prominent approaches for graph classification and representation. Graph kernel based approaches \cite{vishwanathan2010graph}, which map the graphs to Hilbert space implicitly or explicitly, remain to be the state-of-the-art for graph classification for long time. There are different types of graph kernels present in the literature, such as
random walk based kernel \cite{kashima2003marginalized}, 
shortest path based kernels \cite{borgwardt2005shortest},
graphlet counting based kernel \cite{shervashidze2009efficient},
%and most prominently the 
Weisfeiler-Lehman subtree kernel \cite{shervashidze2011weisfeiler} and Deep graph kernel \cite{yanardag2015deep}. But most of the existing graph kernels use hand-crafted features and they often fail to adapt the data distribution of the graph.

Significant progress happened in the domain of node representation and node level tasks via graph neural networks. Spectral graph convolutional neural networks with fast localized convolutions \cite{defferrard2016convolutional,kipf2016semi}, graph attention (GAT) over a pair of connected node in the graph convolution framework \cite{velivckovic2017graph}, attention over different layers of convolution \cite{xu2018representation},
position aware graph neural networks \cite{you2019position} and hyperbolic graph convolution networks \cite{chami2019hyperbolic}
are some notable examples of GNN for node representation.
To go from node embeddings to a single representation for the whole graph, simple aggregation technique such as taking the average of node embeddings in the final layer of a GCN \cite{duvenaud2015convolutional} and more advanced deep learning architectures that operate over the sets \cite{gilmer2017neural}
%and \cite{zhang2018end} 
have been used. 
%Hierarchical neural network structures combined with different graph pooling strategies \cite{zhang2018end} are also proposed in the literature.
Attention based graph classification technique GAM \cite{lee2018graph} is proposed, which processes only a portion of the graph by adaptively selecting a sequence of informative nodes.
DIFFPOOL \cite{ying2018hierarchical} is a recently proposed hierarchical GNN which uses a GCN based pooling to create a set of hierarchical graphs in each level. \cite{lee2019self} propose a self attention based pooling strategy which determines the importance of a node to find the label of the graph.
%Inspired by CapsNet \cite{sabour2017dynamic}, Capsule GNN (CapsGNN) is proposed \cite{xinyi2018capsule}.
Different extensions of GNNs, such as Ego-CNN \cite{tzeng2019distributed} and ChebyGIN \cite{knyazev2019understanding} are proposed for graph classification.
%\cite{tzeng2019distributed} propose an Ego-CNN architecture which is a feedforward convolutional neural network that can be jointly learned with a supervised task to help identify the task specific critical structures in the graph. 
Theoretical frameworks to analyze the representational power of GNNs are proposed in \cite{xu2018how,maron2019provably}.
\cite{knyazev2019understanding} study the ability of attention GNNs to generalize to larger and complex graphs.

Higher order GNNs which operate beyond immediate neighborhood are proposed recently. Based on higher dimensional Weisfeiler-Leman algorithm, k-GNN \cite{morris2019weisfeiler} is proposed which derive the representation of all the subgraphs of size $k$ through convolution. Mixhop GNN for node classification is proposed in \cite{abu2019mixhop} which aggregates node features according to the higher order adjacency matrices. Though these higher order GNNs are more powerful representation of graphs, they do not employ attention in the higher order neighborhood. To the best of our knowledge, \cite{yang2019spagan} is the only work to propose an attention mechanism on the shortest paths starting from a node to generate the node embedding. However, their computation of shortest path depends on the pairwise node attention and this may fail in the cases when a collection of nodes together is important, but not the individual pairs. Our proposed subgraph attention addresses this gap in the literature.
Further, hierarchical pooling as proposed in DIFFPOOL \cite{ying2018hierarchical} has become a popular pooling strategy in GNNs \cite{morris2019weisfeiler}. But it suffers because of the loss of information and its nature to represent the whole graph by the last level (containing only a single node) of the hierarchy. As discussed in Section \ref{sec:intro}, some intermediate levels may play more important role to determine the label of the entire graph than the last one \cite{newman2003structure}. The intra-level and inter level attention mechanisms proposed in this work precisely address this research gap in hierarchical graph representation.

% As discussed in Section \ref{sec:intro}, attention mechanisms used in the existing GNN literature only consider self-attention or the pair-wise attention between two nodes. Recently, attention over an area for a grid structure data (for e.g., images) is proposed by \cite{DBLP:conf/icml/LiKBS19} and show promising results for tasks like image captioning and machine translation. With this motivation, and to fill the research gap, we propose subgraph attention which generalizes attention to a subgraph level and show its efficiency for node and graph classification in this paper.

\section{Proposed Approach: SubGattPool}\label{sec:SubGattPool}
We formally define the problem of graph classification first.
Given a set of $M$ graphs $\mathcal{G} = \{G_1, G_2, \cdots, G_M\}$, and a subset of graphs $\mathcal{G}_s \subseteq \mathcal{G}$ with each graph $G_i \in \mathcal{G}_s$ is labelled with $Y_i \in \mathcal{L}_{g}$ (the subscript $g$ stands for `graphs'), the task is to predict the label of a graph $G_j \in \mathcal{G}_u = \mathcal{G} \setminus \mathcal{G}_s$ using the structure of the graphs and the node attributes, and the graph labels from $\mathcal{G}_s$. Again, this leads to learning a function $f_{g} : \mathcal{G} \mapsto \mathcal{L}_{g}$. Here, $\mathcal{L}_{g}$ is the set of discrete labels for the graphs.

Figure \ref{fig:SubGattPool} shows the high-level architecture of SubGattPool. One major component of SubGattPool is the generation of node representations through \textul{SubG}raph \textul{att}ention (referred as \textit{SubGatt}) layer. Below, we describe the building blocks of SubGatt for any arbitrary graph. %A \textbf{Sub}-\textbf{G}raph \textbf{att}ention network (referred as SubGatt) can be built by stacking multiple layers of subgraph attention.
For the ease of reading, we summarize all the important notations used in this paper in Table \ref{tab:notations}.

\begin{table}
\centering
\resizebox{\linewidth}{!}{%
\begin{tabular}{*6c}
	\toprule
	\sffamily{Notations} & Explanations\\
%    \sffamily{} & & & & Words & Distribution & links \\
    \hline
	\midrule
    $\mathcal{G}=\{G_1,\cdots,G_M\}$ & Set of graphs in a graph dataset \\
    $G=(V,E,X)$ & One single graph \\
    $\mathcal{L}_{g}$ & Set of discrete labels for graphs\\
    $x_j \in \mathbb{R}^D$ & Attribute vector for $j$th node \\
    $\mathbf{S}_i = \{S_{i1},\cdots,S_{iL}\}$ & Multiset of sampled subgraph for the node $v_i$.\\
    $\hat{x}_{il}$  & Derived feature vector of a subgraph\\
    $T$ & Maximum size (i.e., number of nodes) of a subtree\\ 
    $L$ & Number of subgraphs to sample for each node\\
    $x^G \in \mathbb{R}^K$ & Final representation of the graph $G$ \\
    $G^1,\cdots,G^R$ & Level graphs of some input graph $G$ \\
    $Z_r \in \mathbb{R}^{N_r \times K}$ & Embedding matrix of $G^r$ \\
    $P_r \in \mathbb{R}^{N_r \times N_{r+1}}$ & Node assignment matrix from $G^r$ to $G^{r+1}$ \\
    \bottomrule
	\end{tabular}
	}
\caption{Different notations used in the paper}
\label{tab:notations}
\end{table}

\subsection{Subgraph Attention Mechanism}\label{sec:SubGatt}
The input to the subgraph attention network is an attributed graph $G=(V,E)$, where $V = \{v_1,v_2,\cdots,v_N\}$ is the set of $N$ nodes and $x_i \in \mathbb{R}^D$ is the attribute vector of the node $v_i \in V$. The output of the model is a set of node features (or embeddings) $h_i \in \mathbb{R}^K$, $\forall i \in [N]$ ($K$ is potentially different from $D$). We use $[N]$ to denote the set $\{1,2,\cdots,N\}$ for any positive integer $N$.
We define the immediate (or first order) neighborhood of a node $v_i$ as $\mathcal{N}_i = \{v_j | (v_i,v_j) \in E\}$. For the simplicity of notations, we assume an input graph $G$ to be undirected for the rest of the paper, but extending it for directed graph is straightforward.

\subsubsection{Subgraph selection and Sampling}\label{sec:sub_selec}
For each node in the graph, we aim to find the importance of the nearby subgraphs to that node. In general, subgraphs can be of any shape or size. Motivated by the prior works on graph kernels \cite{shervashidze2011weisfeiler}, we choose to consider only a set of rooted subtrees as the set of candidate subgraphs. So for a node $v_i$, any tree of the form $(v_i)$, or $(v_i, v_j)$ where $(v_i, v_j) \in E$, or $(v_i, v_j, v_k)$ where $(v_i, v_j) \in E$ and $(v_j, v_k) \in E$, and so on will form the set of candidate subgraphs of $v_i$. We restrict that maximum size (i.e., number of nodes) of a subtree is $T$. Also note that, the node $v_i$ is always a part of any candidate subgraph for the node $v_i$ according to our design. For example, all possible subgraphs of maximum size 3 for the node a in Figure \ref{fig:exam_graph} are: (a), (a,b), (a,d), (a,f), (a,b,c), (a,b,f), (a,b,g), (a,d,e), (a,f,e) and (a,f,b).

Depending on the maximum size ($T$) of a rooted subtree, the number of candidate subgraphs for a node can be very large. For example, the number of rooted subgraphs for the node $v_i$ is 
$d_{v_i} \times \sum\limits_{v_j \in \mathcal{N}(v_i)} (d_{v_j}-1) \times \sum\limits_{v_k \in \mathcal{N}(v_j) \setminus \{v_i\}} |\mathcal{N}(v_k) \setminus \{v_i,v_j\}|$, 
where $d_{v}$ is the degree of a node $v$ and $T=4$. Clearly, computing attention over these many subgraphs for each node is computationally difficult. So we employ a subgraph subsampling technique, inspired by the node subsampling techniques for network embedding \cite{hamilton2017inductive}. First, we fix the number of subgraphs to sample for each node. Let the number be $L$. For each node in the input graph, if the total number of rooted subtrees of size $T$ is more than (or equal to) $L$, we randomly sample $L$ number of subtrees without replacement. If the total number of rooted subtrees of size $T$ is less than $L$, we use round robin sampling (i.e., permute all the subtrees, picking up samples from the beginning of the list; after consuming all the trees, again start from the beginning till we complete picking $L$ subtrees). For each node, sample of subtrees remains same for one epoch of the algorithm (explained in the next subsection) and new samples are taken in each epoch. In any epoch, let us use the notation $\mathbf{S}_i = \{S_{i1},\cdots,S_{iL}\}$ to denote the set (more precisely it is a multiset as subgraphs can repeat) of sampled subgraph for the node $v_i$.

\begin{figure*}[h!]
  \centering
    \includegraphics[width=0.8\linewidth]{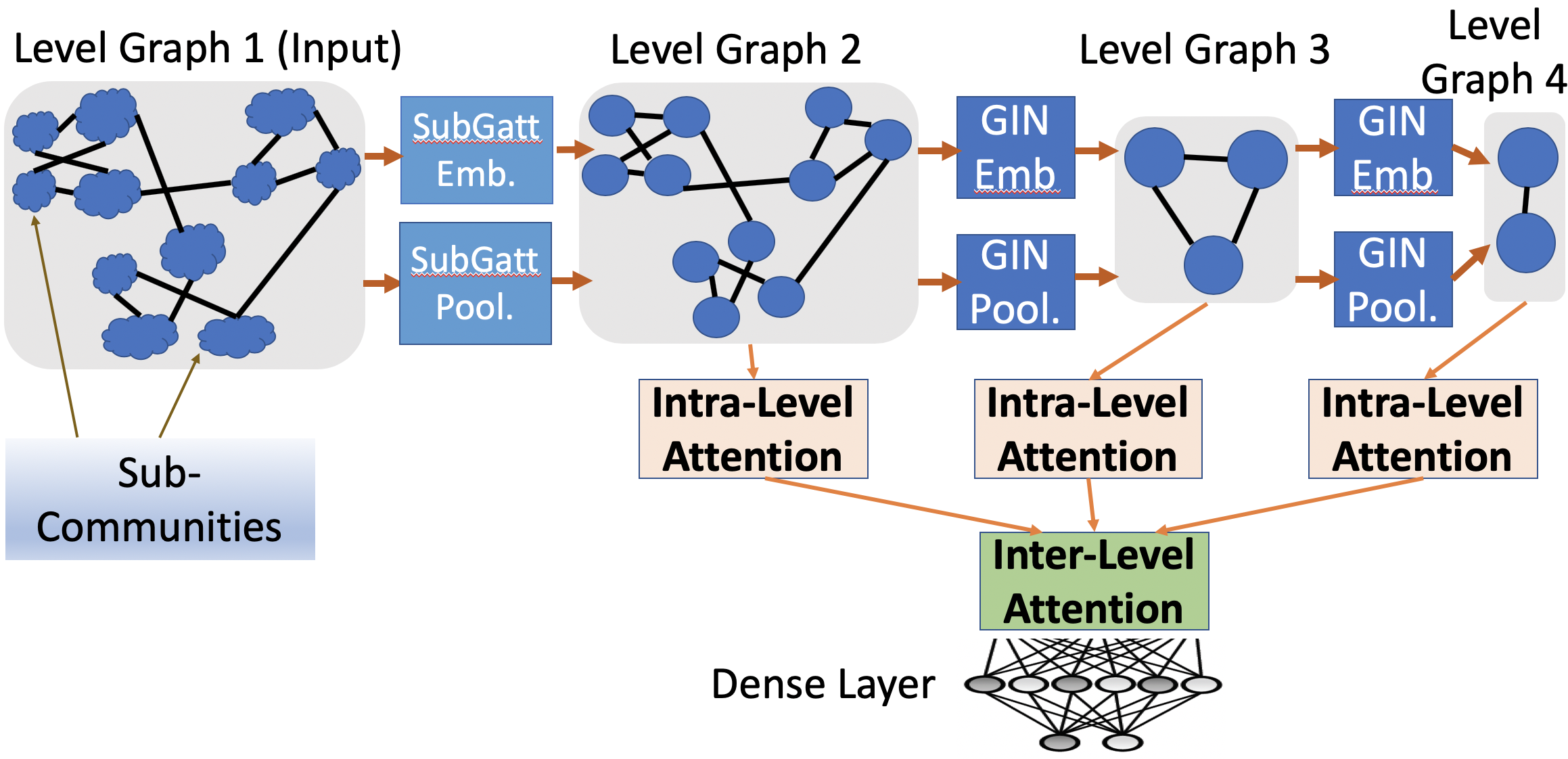}
    \caption{SubGattPool Network for graph classification}
    \label{fig:SubGattPool}
\end{figure*}

\subsubsection{Subgraph Attention Network}\label{sec:sub_atten}
This subsection describes the attention mechanism on the set of rooted subtrees selected for each epoch of the algorithm. As mentioned, the node of interest is always positioned as the root of each subgraph generated for that node. Next step is to generate a feature for the subgraph. We tried different simple feature aggregations (for e.g., mean) of the nodes that belong to the subgraph as the feature of the subgraph. It turns out that concatenation of the features of nodes gives better performance. But for the attention to work, we need equal length feature vectors (the length is $TD$) for all the subgraphs. So if a subgraph has less than $T$ nodes, we append zeros at the end to assign equal length feature vector for all the subgraphs. For example, if the maximum size of a subgraph is $T=4$, then the feature of the subgraph $(v_i,v_j,v_k)$ is $[x_i || x_j || x_k || 0] \in \mathbb{R}^{4D}$, where $||$ is the concatenation operation and $0$ is the zero vector in $\mathbb{R}^D$. Let us denote this derived feature vector of any subgraph $S_{il}$ as $\hat{x}_{il} \in \mathbb{R}^{TD}$, $\forall i \in [N]$ and $\forall l \in [L]$.

Next, we use self-attention on the features for the sampled subgraphs for each node as described here. As the first step, we use a shared linear transformation, parameterized by a trainable weight matrix $W \in \mathbb{R}^{K \times TD}$, to the feature of all the sampled subgraphs $S_{il}$, $\forall i \in [N]$ and $\forall l \in [L]$ selected in an epoch. Next we introduce a trainable self attention vector $a \in \mathbb{R}^{K}$ to compute the attention coefficient $\alpha_{il}$ which captures the importance of the subgraph $S_{il}$ on the node $v_i$, as follows:
\begin{flalign}\label{eq:nodeEmb}
    \alpha_{il} = \frac{exp(\sigma(a^T W \hat{x}_{il}))}{\sum\limits_{l' \in [L]} exp(\sigma(a^T W \hat{x}_{il'}))} \;\;, \nonumber \\
    h_i = \sigma\Big( \sum\limits_{l=1}^L \alpha_{il} W \hat{x}_{il} \Big) \in \mathbb{R}^K \; , \;\;\;\; \forall i \in [N]
\end{flalign}
Here $\sigma()$ is a non-linear activation function. We have used Leaky ReLU as the activation function for all the experiments. $\alpha_{il}$ gives normalized attention scores over the set of sampled subgraphs for each node. We use them to compute the representation $h_i$ of a node $v_i$ as shown in Eq. \ref{eq:nodeEmb}.
% \begin{equation}\label{eq:nodeEmb}
%     h_i = \sigma\Big( \sum\limits_{l=1}^L \alpha_{il} W \hat{x}_{il} \Big) \; , \;\;\;\; \forall i \in [N] 
% \end{equation}
Please note, the attention mechanism described in \cite{velivckovic2017graph} operates only over the immediate neighboring nodes, whereas the higher order attention mechanism proposed in this work operates over the subgraphs. Needless to say, one can easily extend the above subgraph attention by multi-head attention by employing few independent attention mechanisms of Eq. \ref{eq:nodeEmb} and concatenate the resulting representations \cite{vaswani2017attention}. This completes one full subgraph attention layer. We can stack such multiple layers to design a full SubGatt network.

% \subsection{Learning and Node Classification}\label{sec:nodeClassi}
% To use subgraph attention network for node classification, we use a softmax (or logistic sigmoid, depending on the number of classes to predict) as the final layer on top of the node representations. We use standard back propagation algorithm with ADAM optimization to learn the parameters of SubGatt by minimizing the cross entropy loss. For one subgraph attention layer, we just need to learn the linear transformation matrix $W \in \mathbb{R}^{K \times TD}$ and the attention vector $a \in \mathbb{R}^K$. Hence the number of parameters to learn is limited for a smaller value of $T$ (maximum size of a subgraph).

% \textbf{Runtime Complexity of a subgraph attention layer}: First we need to compute the set of possible subgraphs (i.e., rooted subtrees of maximum size $T$) and their respective features for each node, once for the whole dataset. Computing set of all subgraphs and their features for a node takes $d^T$, where $d$ is the average degree of the nodes in the graph. We set $T=4$ for our experiments and average degree of the nodes in real life graphs are small as the networks are highly sparse in nature. Next, in each epoch of SubGatt, we sample $L$ subgraphs for each node and compute Eq. \ref{eq:nodeEmb}. Hence, total runtime for each epoch of subGatt takes $O(NLKTD+Nd^T)$, which is linear with the number of nodes, for a sparse graph. Experimentally, we observe SubGatt to converge fast on all the datasets. 

\subsection{Hierarchically Attentive Graph Pooling}\label{sec:graphClassi}
%As discussed in Section \ref{sec:related}, GNN architectures combined with different pooling mechanisms got promising results for graph classification. The main challenge here is to obtain a single representation of the entire graph which can be used as the features for graph classification in an end-to-end fashion. Global pooling mechanisms, where the node representation are averaged or summed to obtain a graph representation for classification is proposed early in the literature \cite{duvenaud2015convolutional}. Recently, hierarchical graph pooling becomes popular among the researchers, where a graph is converted to a graph of sub-communities in the next level, and then further to a graph of communities and till it becomes only a single node \cite{ying2018hierarchical,lee2019self}. Our approach, though inspired from the hierarchical pooling proposed by \cite{ying2018hierarchical}, combines both global and hierarchical pooling by attention in different levels in the graph. We refer the proposed graph classification architecture by SubGattPool (\textbf{Sub}-\textbf{G}raph and sublevel \textbf{att}ention based \textbf{Pool}ing mechanism), which is described below.

This subsection discusses all the components of SubGattPool architecture. As shown in Figure \ref{fig:SubGattPool}, there are $R=4$ different levels of the graph in the hierarchical architecture. The first level is the input graph. Let us denote these level graphs (i.e., graphs at different levels) by $G^1, \cdots, G^R$. There is a GNN layer between the level graph $G^r$ (i.e., the graph at level $r$) and the level graph $G^{r+1}$. This GNN layer comprises of an embedding layer which generates the embedding of the nodes of $G^r$ and a pooling layer which maps the nodes of $G^r$ to the nodes of $G^{r+1}$. We refer the GNN layer between the level graph $G^r$ and $G^{r+1}$ by $r$th layer of GNN, $\forall r=1,2,\cdots,R-1$.
%The last level graph $G^R$ contains only one node, whose feature summarizes the entire input graph. 
Pleas note, number of nodes $N_1$ in the first level graph depends on the input graph, but we keep the number of nodes $N_r$ in the consequent level graphs $G^r$ ($\forall r=2,\cdots,R$) fixed for all the input graphs (in a graph classification dataset), which help us to design the shared hierarchical attention mechanisms, as discussed later. As pooling mechanisms shrink a graph, $N_r>N_{r+1}$, $\forall r \leq R-1$.

Let us assume that any level graph $G^r$ is defined by its adjacency matrix $A_r \in \mathbb{R}^{N_r \times N_r}$ and the feature matrix $X_r \in \mathbb{R}^{N_r \times K}$ (except for $G^1$, which is the input graph and its feature matrix $X_r \in \mathbb{R}^{N_r \times D}$). 
%Naturally, $A_1$ and $X_1$ are given to the GNN. 
The $r$th embedding layer and the pooling layer are defined by:
\begin{flalign}\label{eq:embPool}
    Z_r = 
    \begin{cases}
        \text{SubGatt}_{embed}(A_r,X_r) \; , \; r=1 \\
        \text{GIN}_{r,embed}(A_r,X_r) \; , \; r>1
    \end{cases} \nonumber\\
    P_r = 
    \begin{cases}
        \text{softmax}(\text{SubGatt}_{pool}(A_r,X_r)) \; , \; r=1 \\
        \text{softmax}(\text{GIN}_{r,pool}(A_r,X_r)) \; , \; 1 < r \leq R-1
    \end{cases}
\end{flalign}
Here, $Z_r \in \mathbb{R}^{N_r \times K}$ is the embedding matrix of the nodes of $G^r$. The softmax after the pooling is applied row-wise. $(i,j)$th element of $P_r \in \mathbb{R}^{N_r \times N_{r+1}}$ gives the probability of assigning node $v_i^r$ in $G^r$ to node $v_j^{r+1}$ in $G^{r+1}$. Based on these, the graph $G^{r+1}$ is constructed as follows,
\begin{flalign}\label{eq:adjFeat}
    A_{r+1} = P_r^T A_r P_r %\in \mathbb{R}^{N_{r+1} \times N_{r+1}} 
    \;;\;%\nonumber \\ %\;\; \text{and} \;\;
    X_{r+1} = P_r^T Z_r %\in \mathbb{R}^{N_{r+1} \times K}
\end{flalign}
The matrix $P_r$ contains information about how nodes in $G^r$ are mapped to the nodes of $G^{r+1}$, and the adjacency matrix $A_r$ contains information about the connection of nodes in $G^r$. Eq. \ref{eq:adjFeat} combines them to generate the connections between the nodes (i.e., the adjacency matrix $A_{r+1}$) of $G^{r+1}$. Node feature matrix $X_{r+1}$ of $G^{r+1}$ is also generated similarly.
As the embedding and pooling GNNs, we use SubGatt networks (Section \ref{sec:SubGatt}) only after the level graph 1. This is because other level graphs $G^r$ ($r>1$) have more number of soft edges (i.e., with probabilistic edge weights) due to use of softmax at the end of pooling layers. Hence, the number of neighboring rooted subtrees will be high in those level graphs and the chance of having discrete patterns will be less. We use GIN \cite{xu2018how} as the embedding and pooling GNNs for $G^r$, $r>1$. GIN has been shown to be the most powerful 1st order GNN and the $l$th layer of GIN can be defined as:
\begin{equation}\label{eq:GIN}
    h_v^{l+1} = MLP^l \Big( (1+\epsilon^l) h_v^l + \sum\limits_{u \in \mathcal{N}(v)} h_u^l \Big) 
\end{equation}
Here, $h_v^{l+1} \in \mathbb{R}^K$ is the hidden representation of the node $v$ in $l+1$th layer of GIN and $\epsilon$ is a learnable parameter.

\textbf{Intra-level attention layer}: As observed in \cite{lee2019self}, hierarchical GNNs often suffer because of the loss of information in various embedding and pooling layers, from the input graph to the last level graph summarizing the entire graph. Moreover, the learned hierarchy is often not perfect due to noisy structure of the real world graphs. To alleviate these problems, we propose to use attention mechanisms again, to combine features from different level graphs of our hierarchical architecture. We consider level graphs $G^2$ to $G^R$ for this, as their respective numbers of nodes are same across all the graphs in a dataset. We introduce \textit{intra-level attention layer} to obtain a global feature for each level graphs $G^r$, $\forall r=2,\cdots,R$.
%(since, $G^r$ has only one node, so no feature aggregation is required). 
More precisely, we use the convolution based self attention within the level graph $G^r$ as:
\begin{equation}\label{eq:intraLevel}
    e_r = \text{softmax}(\widetilde{D}_r^{-\frac{1}{2}}\widetilde{A_r}\widetilde{D}_r^{-\frac{1}{2}}X_r \theta) \; \in \mathbb{R}^{N_r} \;\;\text{and}\;\;
    x^r = X_r^T e_r \; \in \mathbb{R}^K 
\end{equation}
Here, the softmax to compute $e_r$ is taken so that a component of $e_r$ becomes the normalized (i.e., probabilistic) importance of the corresponding node in $G^r$. $\widetilde{A_r} = A_r + I_{N_r}$ is the adjacency matrix with added self loops of $G^r$. $\widetilde{D}$ is the diagonal matrix of dimension $N_r \times N_r$ with $\widetilde{D}(i,i) = \sum\limits_{j=1}^{N_r} \widetilde{A}_{ij}$. $\theta \in \mathbb{R}^K$ is the trainable vector of parameters of intra-level attention, which is shared across all the level graphs $G^r$, $\forall r=2,\cdots,R$. Intuitively, $\theta$ contains the importance of individual attributes and the components of $N_r$ dimensional $X_r \theta$ gives the same for each node. Finally, multiplying that with $\widetilde{D}_r^{-\frac{1}{2}}\widetilde{A_r}\widetilde{D}_r^{-\frac{1}{2}}$ produces the (normalized) importance of a node based on its own features and the features of immediate neighbors (for one layer of intra-level attention). Hence, $x^r$, which is a $K$ dimensional representation of the level graph $G^r$, is a sum of the features of the nodes weighted by the respective normalized node importance. Please note, the impact from the first few level graphs becomes noisy due to too many subsequent operations in a hierarchical pooling method. But representing level graphs separately by the proposed intra-level attention makes their impact more prominent.

\textbf{Inter-level attention layer}: 
%After the application of intra-level attention layer, we have one vector representation $x_r \in \mathbb{R}^K$ from $G^r$, $\forall$ $2 \leq r \leq R$. %and the feature vector $Z_R \in \mathbb{R}^K$ (from Eq. \ref{eq:embPool}). Let's set $x_R = Z_R$. 
This layer aims to get the final representation, referred as $x^G \in \mathbb{R}^K$, of the input graph from $x_2,\cdots,x_R$; as obtained from the intra-level attention layers.
%in contrast to a hierarchical pooling which just considers $Z_R$ to be the final representation. 
%We introduce \textit{inter-level attention layer}, which takes $x_2,\cdots,x_R$ as input and generates $x^G \in \mathbb{R}^K$ as the final graph representation to be fed to a neural classifier. We again use a self-attention as follows:
It is fed to a neural classifier. As different level graphs of the hierarchical representation have different importance to determine the label of the input graph, we propose to use the following self-attention mechanism.
\begin{equation}\label{eq:interLevel}
    \widetilde{e} = \text{softmax}(X_{inter} \widetilde{\theta}) \in \mathbb{R}^{R-1} \;\; \text{and}\;\;
    x^G = X_{inter}^T \widetilde{e} \in \mathbb{R}^K
\end{equation}
$X_{inter}$ is the $R-1 \times K$ dimensional matrix whose rows correspond to $x^r$ (the output of intra-level attention layer for $G^r$), $r=2,\cdots,R$. $\widetilde{\theta} \in \mathbb{R}^K$ is a trainable self attention vector.
Similar to Eq. \ref{eq:intraLevel}, softmax is taken to convert $\widetilde{e}$ to a probability distribution of importance of different graph levels. Finally, the vector representation $x^G$ of the input graph is computed as a weighted sum of representations of different level graphs $G^2,\cdots,G^R$.
$x^G$ is fed to a classification layer of the GNN, which is a dense layer followed by a softmax to classify the entire input graph in an end-to-end fashion. This completes the construction of SubGattPool architecture. 

\subsection{Key Insights of SubGattPool}\label{sec:keyInsights}
First layer of SubGattPool consists of an embedding SubGatt network and a pooling SubGatt network, which have a total of $O(KTD)$ trainable parameters. Consequent layers of SubGattPool have GIN as embedding and pooling layers, which have a total of $O(RKD)$ parameters. Total number of parameters for $R-2$ intra-level attention layers is $O(K)$, as $\theta \in \mathbb{R}^K$ is shared across the level graphs. Finally the inter-level attention layer has $O(K)$ parameters. Hence, total number of parameters to train in SubGattPool network is $O(KTD + RKD)$, which is independent of both the average number of nodes and the number of graphs in the dataset. We use ADAM (with learning rate set to 0.001) on the cross-entropy loss of graph classification to train these parameters.

%The hierarchical structure of SubGattPool (first row of Figure \ref{fig:SubGattPool}) is similar to that in DIFFPOOL \cite{ying2018hierarchical}. But, 
Please note that in contrast to existing hierarchical pooling mechanisms in GNN \cite{ying2018hierarchical,morris2019weisfeiler}, SubGattPool does not only rely on the last level of the GNN hierarchy to obtain the final graph representation. SubGattPool even may have more than 1 node in the last level graph. Essentially information from all the level graphs are aggregated through the attention layers.
SubGattPool is also less prone to information loss in the hierarchy and able to learn importance of individual nodes in a hierarchy (i.e., within a level graph) and the importance of different hierarchies. In terms of design, most of the existing GNNs use GCN embedding and pooling layers \cite{ying2018hierarchical}. Whereas, we propose subgraph attention mechanism through SubGatt network (discussed in Section \ref{sec:SubGatt}) and use it along with GIN as different embedding and pooling layers of SubGattPool. Following lemma shows that SubGattPool, though have different types of components in the overall architecture, satisfies a fundamental property required to be a graph neural network.

\begin{lemma}
	\textbf{For a graph $G=(V,E)$, with adjacency matrix $A \in \mathbb{R}^{|V| \times |V|}$ and node attribute matrix $X \in \mathbb{R}^{|V| \times D}$, let us use \\ $SubGattPool(A,X)$ to denote the final graph representation generated by SubGattPool on that graph. Let, $P \in \{0,1\}^{|V| \times |V|}$ is any permutation matrix. Assuming that the initialization and random selection strategies of the neural architecture are always the same, $SubGattPool(P A P^T, PX) = SubGattPool(A,X)$.}
\end{lemma}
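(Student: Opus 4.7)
The plan is to work component by component through the \textit{SubGattPool} pipeline, establishing that the very first layer (SubGatt embedding/pooling on $G^1$) is permutation \emph{equivariant}, and that the construction of $G^2$ from $G^1$ then absorbs the permutation, leaving every subsequent quantity permutation \emph{invariant}. Concretely, let $P\in\{0,1\}^{|V|\times|V|}$ be a permutation, and write $A'=PAP^T$, $X'=PX$ for the permuted input.

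First I would show that $\text{SubGatt}_{embed}(A',X') = P\cdot \text{SubGatt}_{embed}(A,X)$ and analogously for $\text{SubGatt}_{pool}$. Under the stated assumption that random selection strategies are identical, the multiset $\mathbf{S}_{\pi(i)}$ sampled on the permuted graph is, as a multiset of rooted subtrees of attribute-labelled nodes, the same as $\mathbf{S}_i$ sampled on the original graph. Because the per-subgraph feature $\hat{x}_{il}$ is built by concatenating the attribute vectors of the nodes in the subtree (with the root fixed and a canonical ordering thereafter) and then zero-padding to length $TD$, the derived features coincide. The attention weights in Eq.~\ref{eq:nodeEmb} depend only on these features and on the shared trainable parameters $W,a$, so $\alpha_{il}$ and $h_i$ are identical in both runs for corresponding nodes. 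This gives the desired equivariance of the first SubGatt embedding layer, and the identical argument yields equivariance of the SubGatt pooling layer, so $Z_1' = PZ_1$ and $P_1' = PP_1$.

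Next I would plug this into Eq.~\ref{eq:adjFeat} to cancel the permutation at the transition to the second level. Indeed
\begin{equation*}
A_2' = (P_1')^T A_1' P_1' = (PP_1)^T(PAP^T)(PP_1) = P_1^T A_1 P_1 = A_2,
\end{equation*}
since $P^TP=I$; and likewise $X_2' = (PP_1)^T(PZ_1) = P_1^T Z_1 = X_2$. Thus the second-level graph, and therefore by induction every level graph $G^r$ for $r\ge 2$, is identical in the two runs: the GIN embedding and pooling at levels $r\ge 2$ receive identical inputs and produce identical outputs $Z_r,P_r$, and the recursion $A_{r+1}=P_r^TA_rP_r$, $X_{r+1}=P_r^T Z_r$ preserves equality at every step. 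Consequently the intra-level attention (Eq.~\ref{eq:intraLevel}) produces the same $x^r$ for $r=2,\dots,R$, the matrix $X_{inter}$ is identical in both runs, and the inter-level attention (Eq.~\ref{eq:interLevel}) yields the same $x^G$.

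The main obstacle is really the first step: justifying that the SubGatt layer on the (permutation-sensitive) input $G^1$ is equivariant. The subtlety lies in the concatenation-based subgraph features, which are order-sensitive; one must articulate that the ordering within each rooted subtree is determined canonically from the subtree's shape (root first, then a deterministic traversal), so that the features $\hat{x}_{il}$ depend on the abstract rooted attributed subtree and not on how its nodes happen to be indexed in $V$. Once this is spelt out, together with the assumption that the random sampler draws the same abstract multisets in both runs, the rest of the proof is the mechanical cascade above and the invariance $\text{SubGattPool}(PAP^T,PX)=\text{SubGattPool}(A,X)$ follows.
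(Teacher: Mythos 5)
Your proof is correct and follows the same component-by-component strategy as the paper, but your decomposition is genuinely sharper and, in one place, more accurate. The paper's proof simply asserts that each block --- SubGatt (a weighted-sum aggregator with learned attention weights), the GIN layers, and the intra-/inter-level attentions --- is invariant to node permutations, and concludes. Strictly speaking, that assertion is imprecise for the first layer: $\text{SubGatt}_{embed}$ and $\text{SubGatt}_{pool}$ applied to $(PAP^T, PX)$ are permutation \emph{equivariant}, not invariant ($Z_1' = PZ_1$ and, since the row-wise softmax commutes with row permutations, $P_1' = PP_1$), and invariance only emerges at the $G^1 \to G^2$ transition, where Eq.~\ref{eq:adjFeat} cancels the permutation via $P^TP = I$ --- exactly the computation you carry out, after which all higher level graphs coincide by induction and the intra-/inter-level attentions trivially produce the same $x^G$. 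You also flag the one genuine subtlety that the paper passes over silently: the concatenated subtree feature $\hat{x}_{il}$ is order-sensitive, so one must observe that the ordering inside each rooted subtree is canonical (root first, then along the path), making $\hat{x}_{il}$ a function of the abstract attributed subtree rather than of the node indexing; combined with the lemma's stated assumption that the sampler draws the same multisets in both runs, this closes the argument. In short, your proof has the same skeleton as the paper's but makes explicit the equivariance-to-invariance bookkeeping that the paper's sketch elides, which is where the actual content of the lemma lives.
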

\begin{proof}
	Please note that $P A P^T$ is the new adjacency matrix and $PX$ is the new feature matrix of the same graph $G$ under the node permutation defined by the permutation matrix $P$.
	So, to prove the above, we need to show that each component of SubGattPool is invariant to any node permutation. 
	First, SubGatt uses attention mechanism over the neighboring subgraphs through Equation \ref{eq:nodeEmb}. Clearly, different ordering of neighbors would not affect the node embeddings as we use a weighted sum aggregator where weights are learned through the subgraph attention.
	Next, the GIN aggregator (as in Equation \ref{eq:GIN}) is also invariant to node permutation. Thus, all the embedding and pooling layers (as shown in Figure \ref{fig:SubGattPool}) present in SubGattPool are invariant to different node permutations. Finally, both intra-level and inter-level attention mechanisms also do not depend on the ordering of the nodes in any level graph, as each of them uses sum aggregation with self-attention. Hence, SubGattPool is invariant to node permutations of the input graph.
\end{proof}

\section{Experimental Evaluation}\label{sec:exp}
This section describes the details of the experiments conducted on both real-life and synthetic datasets.

\begin{table*}
\centering
%\resizebox{\columnwidth}{!}{%
\begin{tabular}{*6c}
	\toprule
	\sffamily{Dataset} & \#Graphs & \#Max Nodes & Avg. Number of Nodes & \#Labels & \#Attributes\\
%    \sffamily{} & & & & Words & Distribution & links \\
    \hline
	\midrule
	\sffamily{MUTAG} & 188 & 28 & 17.93 & 2 & NA\\
	\sffamily{PTC} & 344 & 64 & 14.29 & 2 & NA\\
    \sffamily{PROTEINS} & 1113 & 620 & 	39.06 & 2 & 29\\
    \sffamily{NCI1} & 4110 & 111 & 29.87 & 2 & NA\\
    \sffamily{NCI109} & 4127 & 111 & 29.68 & 2 & NA\\
    \sffamily{IMDB-BINARY} & 1000 & 136 & 19.77 & 2 & NA\\
    \sffamily{IMDB-MULTI} & 1500 & 89 & 13.00 & 3 & NA\\
    
    \bottomrule
	\end{tabular}
%}
\caption{Statistics of different datasets used in our experiments}
\label{tab:datasets}
\end{table*}

\subsection{Experimental Setup for Graph Classification}\label{sec:dsBase}
We use 5 bioinformatics graph datasets (MUTAG, PTC, PROTEINS, NCI1 and NCI09) and 2 social network datasets (IMDB-BINARY and IMDB-MULTI) to evaluate the performance for graph classification. The details of these datasets can be found at (\url{https://bit.ly/39T079X}). Table \ref{tab:datasets} contains a high-level summary of these datasets. 
%We also conduct experiment on synthetic dataset in Section \ref{sec:synthetic} to get more insights.

To compare the performance of SubGattPool, we choose twenty state-of-the-art baseline algorithms from the domains of graph kernels, unsupervised graph representation and graph neural networks (Table \ref{tab:graphClassi}).
The reported accuracy numbers of the baseline algorithms are collected from \cite{maron2019provably,Sun2020InfoGraph,narayanan:mlg2017} where the same experimental setup is adopted. Thus, we avoid any degradation of the performance of the baseline algorithms due to insufficient parameter tuning and validation.

We adopt the same experimental setup as there in \cite{xu2018how}. We perform 10-fold cross validation and report the averaged accuracy and corresponding standard deviation for graph classification.
We keep the values of the hyperparameters to be the same across all the datasets, based on the averaged validation accuracy.
We set the pooling ratio (defined as $\gamma = \frac{N_{r+1}}{N_r}$, $\forall r \leq R-1$) at 0.5, the number of levels R=3 and the maximum subgraph size (T) to be 3. We sample L=12 subgraphs for each node in each epoch of SubGatt. Following most of the literature, we set the embedding dimension K to be 128. We use L2 normalization and dropout in SubGattPool architecture to make the training stable.

\begin{table*}
\centering
\resizebox{0.85\linewidth}{!}{%
\begin{tabular}{*8c} %& node2vec & GCN & GAT & DGI & \textbf{SubGatt}
	\toprule
	\sffamily{Algorithms} & \footnotesize MUTAG&PTC&PROTEINS&NCI1&NCI109&IMDB-B&IMDB-M \\
%    \sffamily{} & & & & Words & Distribution & links \\
    \hline
	\midrule
    
    GK \cite{shervashidze2009efficient}  &81.39$\pm$1.7&55.65$\pm$0.5&71.39$\pm$0.3&62.49$\pm$0.3&62.35$\pm$0.3&NA&NA \\
    RW \cite{vishwanathan2010graph}  &79.17$\pm$2.1&55.91$\pm$0.3&59.57$\pm$0.1&NA& NA&NA&NA\\
    PK \cite{neumann2016propagation} &76$\pm$2.7&59.5$\pm$2.4&73.68$\pm$0.7&82.54$\pm$0.5&NA&NA&NA\\
    WL \cite{shervashidze2011weisfeiler} &84.11$\pm$1.9&57.97$\pm$2.5&74.68$\pm$0.5&\textbf{84.46$\pm$0.5}&\textbf{85.12$\pm$0.3}&NA&NA\\
%    FGSD&\textbf{92.12}& 62.80& 73.42 &79.80 &78.84\\
    AWE-DD \cite{ivanov2018anonymous}  &NA& NA& NA& NA& NA&74.45$\pm$5.8&51.54$\pm$3.6\\
    AWE-FB \cite{ivanov2018anonymous} &87.87$\pm$9.7& NA &NA& NA& NA&73.13$\pm$3.2&51.58$\pm$4.6\\
    \hline
    node2vec \cite{grover2016node2vec} &72.63$\pm$10.20&58.85$\pm$8.00&57.49$\pm$3.57&54.89$\pm$1.61&52.68$\pm$1.56&NA&NA\\
    sub2vec \cite{adhikari2017distributed} &61.05$\pm$15.79&59.99$\pm$6.38&53.03$\pm$5.55&52.84$\pm$1.47&50.67$\pm$1.50&55.26$\pm$1.54&36.67$\pm$0.83\\
    graph2vec \cite{narayanan2017graph2vec} &83.15$\pm$9.25&60.17$\pm$6.86&73.30$\pm$2.05&73.22$\pm$1.81&74.26$\pm$1.47&71.1$\pm$0.54&50.44$\pm$0.87\\
    InfoGraph \cite{Sun2020InfoGraph} &89.01$\pm$1.13&61.65$\pm$1.43&NA&NA&NA&73.03$\pm$0.87&49.69$\pm$0.53\\
    \hline
    DGCNN \cite{zhang2018end} & 85.83$\pm$1.7& 58.59$\pm$2.5& 75.54$\pm$0.9& 74.44$\pm$0.5& NA&70.03$\pm$0.9&47.83$\pm$0.9\\
    PSCN \cite{niepert2016learning} &88.95$\pm$4.4&62.29$\pm$5.7&75$\pm$2.5&76.34$\pm$1.7&NA&71$\pm$2.3&45.23$\pm$2.8\\
    DCNN \cite{atwood2016diffusion} &NA&NA&61.29$\pm$1.6&56.61$\pm$1.0&NA&49.06$\pm$1.4&33.49$\pm$1.4\\
    ECC \cite{simonovsky2017dynamic} &76.11&NA&NA&76.82&75.03&NA&NA\\
    DGK \cite{yanardag2015deep} &87.44$\pm$2.7&60.08$\pm$2.6&75.68$\pm$0.5&80.31$\pm$0.5&80.32$\pm$0.3&66.96$\pm$0.6&44.55$\pm$0.5\\
    DIFFPOOL \cite{ying2018hierarchical} &83.56&NA&76.25&NA&NA&NA&47.91\\
  %  CCN&91.64$\pm$7.2&70.62$\pm$7.0&NA&76.27$\pm$4.1&75.54$\pm$3.4\\
    IGN \cite{maron2018invariant} &83.89$\pm$12.95&58.53$\pm$6.86&76.58$\pm$5.49&74.33$\pm$2.71&72.82$\pm$1.45&72.0$\pm$5.54&48.73$\pm$3.41\\
    GIN \cite{xu2018how} &89.4$\pm$5.6&64.6$\pm$7.0&76.2$\pm$2.8&82.7$\pm$1.7&NA&75.1$\pm$5.1&52.3$\pm$2.8\\
    1-2-3GNN \cite{morris2019weisfeiler} &86.1$\pm$&60.9$\pm$&75.5$\pm$&76.2$\pm$&NA&74.2$\pm$&49.5$\pm$\\
    3WL-GNN \cite{maron2019provably} &90.55$\pm$8.7&66.17$\pm$6.54&\textbf{77.2$\pm$4.73}&83.19$\pm$1.11&81.84$\pm$1.85&72.6$\pm$4.9&50$\pm$3.15\\
    \hline
    \textbf{SubGattPool}&\textbf{93.29$\pm$4.78}&\textbf{67.13$\pm$6.45}&76.92$\pm$3.44&82.59$\pm$1.42&80.95$\pm$1.76&\textbf{76.49$\pm$2.94}&\textbf{52.46$\pm$3.48}\\
    Rank & 1 & 1 & 2 & 3 & 3 & 1 & 1 \\
    \bottomrule
\end{tabular}
}
\caption{Classification accuracy (\%) of different algorithms (21 in total) for graph classification. NA denotes the case when the result of a baseline algorithm could not be found on that particular dataset from the existing literature. The last row `Rank' is the rank (1 being the highest position) of our proposed algorithm SubGattPool among all the algorithms present in the table.}
\label{tab:graphClassi}
\end{table*}

\subsection{Performance on Graph Classification}
Table \ref{tab:graphClassi} shows the performance of SubGattPool along with the diverse set of baseline algorithms for graph classification on multiple real-world datasets. From the results, we can observe that SubGattPool is able to improve the state-of-the-art on MUTAG, PTC, IMDB-B and IMDB-M for graph classification. On PROTEINS, the performance gap with the best performing baseline (which is 3WL-GNN \cite{maron2019provably} for both) is less than 1\%. But on NCI1 and NCI109, WL kernel turns out to be the best performing algorithm with a good margin ($>1\%$) from all the GNN based algorithms.
It is interesting to note that SubGattPool is able to outperform existing hierarchical GNN algorithms DIFFPOOL and 1-2-3GNN consistently on all the datasets. This is because of the use of (i) attention over subgraphs in SubGatt embedding and pooling layers, and (ii) use of intra-level and inter-level attention mechanisms over different level graphs which makes the overall architecture more robust and reduces information loss.
In terms of standard deviation, SubGattPool is highly competitive and often better than most of the better performing GNNs (specially GIN and 3WL-GNN).

\begin{figure}[h!]
  \centering
  \begin{subfigure}{0.33\linewidth}
  \centering
    \includegraphics[width=\linewidth]{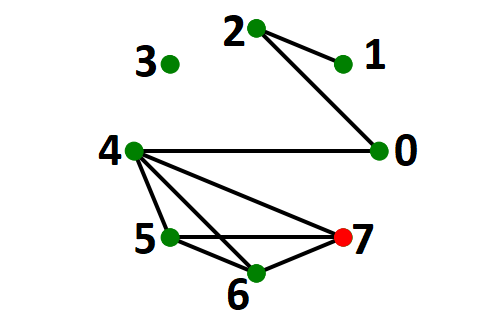}
    \caption{}
    \label{fig:simGraph}
  \end{subfigure}
  \begin{subfigure}{0.65\linewidth}
  \centering
    \includegraphics[width=0.9\linewidth]{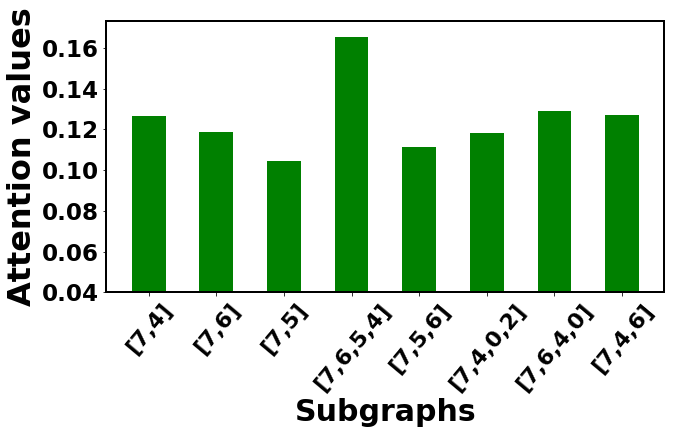}
    \caption{}
    \label{fig:subbAtten}
  \end{subfigure}
  \caption{ (a) A sample graph. (b) Normalized attention values of different subgraphs selected for the node 7 of the Graph in (a). Clearly, attention to the clique of size 4 is more than all the other subgraphs.}
  \label{fig:attention}
\end{figure}

\subsection{Interpretation of Subgraph Attention via Synthetic Experiment}\label{sec:synthetic}
Subgraph attention is a key component of SubGattPool. Here, we validate the learned attention values on different subgraphs by conducting an experiment on a small synthetic dataset containing 50 graphs, and each graph having 8 nodes. Each graph has 2 balanced communities and exactly for 50\% of the graphs, one community consists of a clique of size 4. We label a graph with +1 if the clique of size 4 is present, otherwise the label is -1. The goal of this experiment is to see if SubGattPool is able to learn this simple rule of graph classification by paying proper attention to the substructure which determines the label of a graph.

We run SubGattPool on this synthetic dataset, with $K=16$, $T=4$, $L=12$ \#SubGatt layers=1, $\gamma=0.75$ and $R=3$. Once the training is complete, we randomly select a graph from the positive class and a node in it and plot the attention values of all the subgraphs selected in the last epoch for that node, in Figure \ref{fig:attention}. Clearly, the attention value corresponding to the clique (containing the nodes 7, 6, 5 and 4) is much higher than that to the other subgraphs. We have manually verified the same observation on multiple graphs in this synthetic dataset. Thus, SubGattPool is able to pay more attention to the correct substructure (i.e., subgraph) and pay less attention to other irrelevant substructures. This also explains the robust behavior of SubGattPool.
%Hence, our algorithm is able to identify and pay more importance to the structure which determines the label of the graph.

\begin{figure*}[h!]
  \centering
  \begin{subfigure}[b]{0.24\linewidth}
    \includegraphics[width=\linewidth]{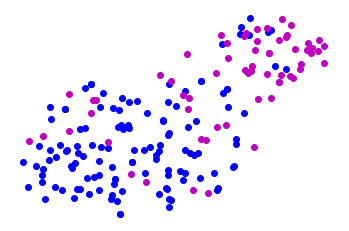}
    \caption{\scriptsize DIFFPOOL}
    \label{fig:vizDiffpool}
  \end{subfigure}
  \begin{subfigure}[b]{0.24\linewidth}
    \includegraphics[width=\linewidth]{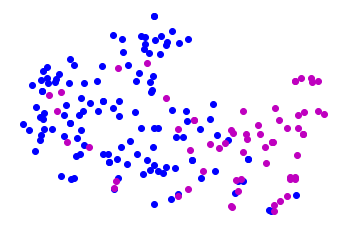}
    \caption{\scriptsize SubGattPool \textbackslash SubGatt}
    \label{fig:viz-subgatt}
  \end{subfigure}
  \begin{subfigure}[b]{0.24\linewidth}
    \includegraphics[width=\linewidth]{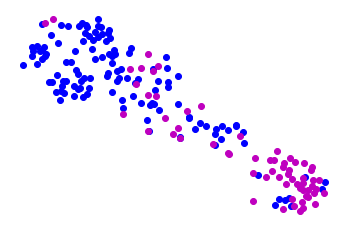}
    \caption{\scriptsize SubGattPool \textbackslash I-I-L-A}
    \label{fig:viz-IIL}
  \end{subfigure}
  \begin{subfigure}[b]{0.24\linewidth}
    \includegraphics[width=\linewidth]{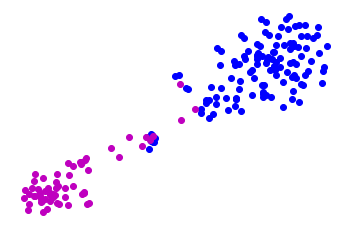}
    \caption{\scriptsize SubGattPool}
    \label{fig:vizSubGattPool}
  \end{subfigure}
  \caption{t-SNE visualization of the graphs from MUTAG (different colors show different labels of the graphs) by the representations generated by: (a) DIFFPOOL; (b) SubGattPool, but the SubGatt embedding and pooling layers being replaced by GIN; (c) SubGattPool without intra and inter layer attention; (d) the complete SubGattPool network. Compared to (a), there is improvement of performances for both the SubGatt layer and the intra/inter-level attention individually. Finally different classes are separated most by SubGattPool which again shows the merit of the proposed algorithm.}
  \label{fig:visMUTAG}
\end{figure*}

\begin{figure*}[h!]
  \centering
  \begin{subfigure}[b]{0.24\linewidth}
    \includegraphics[width=\linewidth]{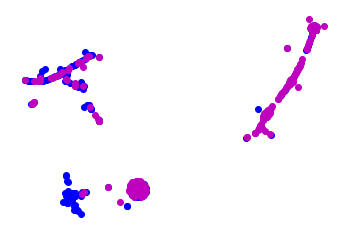}
    \caption{\scriptsize DIFFPOOL}
    \label{fig:vizPTCDiffpool}
  \end{subfigure}
  \begin{subfigure}[b]{0.24\linewidth}
    \includegraphics[width=\linewidth]{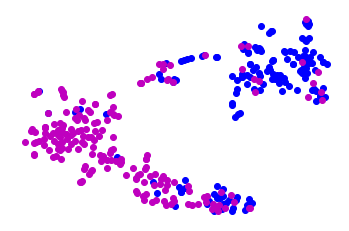}
    \caption{\scriptsize SubGattPool \textbackslash SubGatt}
    \label{fig:vizPTC-subgatt}
  \end{subfigure}
  \begin{subfigure}[b]{0.24\linewidth}
    \includegraphics[width=\linewidth]{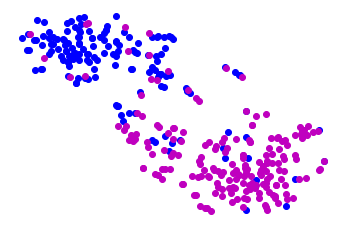}
    \caption{\scriptsize SubGattPool \textbackslash I-I-L-A}
    \label{fig:vizPTC-IIL}
  \end{subfigure}
  \begin{subfigure}[b]{0.24\linewidth}
    \includegraphics[width=\linewidth]{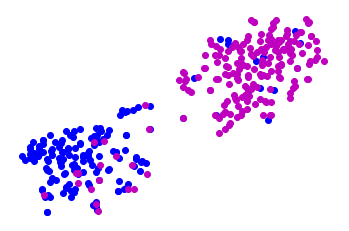}
    \caption{\scriptsize SubGattPool}
    \label{fig:vizPTC_SubGattPool}
  \end{subfigure}
  \caption{t-SNE visualization of the graphs from PTC (different colors show different labels of the graphs) by the representations generated by different GNN algrotihms. The description of each variant of SubGattPool is exactly the same as in Figure \ref{fig:visMUTAG}. Again for PTC also, we can see the performances of different variants of SubGattPool are better than that of DIFFPOOL and the overall performance of SubGattPool in visualizing PTC is better than the other variants which are obtained by removing one or more components from SubGattPool.}
  \label{fig:visPTC}
\end{figure*}

\subsection{Graph Clustering}\label{sec:graphClus}
Though our proposed algorithm SubGattPool is for graph classification, we also wants to check the quality of the graph representations $x^G$, $\forall G \in \mathcal{G}$ obtained in SubGattPool through graph clustering. We use only a subset of recently proposed GNN based algorithm as baselines in this experiment. We use similar hyperparameter values (discussed in Section \ref{sec:exp}) as applicable and adopt same hyperparameter tuning strategy to obtain the graph representation for all the algorithms considered. The vector representations obtained for all the graphs by a GNN are given to K-Means++ \cite{arthur2006k} algorithm to get the clusters. 
To evaluate the quality of clustering, we use unsupervised clustering accuracy \cite{bandyopadhyay2019outlier,bandyopadhyay2020outlier} which uses different permutations of the labels and chooses the label ordering which gives the best possible accuracy $Acc(\mathcal{\hat{C}},\mathcal{C}) = \max_{\mathcal{P}} \frac{ \sum\limits_{i=1}^N \mathbf{1}(\mathcal{P}(\mathcal{\hat{C}}_i)  = \mathcal{C}_i) }{N}$.
% \begin{align}
% \end{align}
Here $\mathcal{C}$ is the ground truth labeling of the dataset such that $\mathcal{C}_i$ gives the ground truth label of $i$th data point. Similarly $\mathcal{\hat{C}}$ is the clustering assignments discovered by some algorithm, and $\mathcal{P}$ is a permutation of the set of labels.
We assume $\mathbf{1}$ to be a logical operator which returns 1 when the argument is true, otherwise returns 0. Table \ref{tab:graphClus} shows that SubGattPool is able to outperform all the baselines we used for graph clustering on all the three datasets. Please note that DGI and InfoGraph derive the graph embeddings in an unsupervised way, whereas DIFFPOOL and SubGattPool use supervision. Naturally, the performance of the later two are better on all the datasets. Further, the use of subgraph attention along with the hierarchical attention layers helps SubGattPool to perform consistently better than DIFFPOOL which is also hierarchical in nature.

\begin{table}
\centering
%\small
\begin{tabular}{*4c} %& node2vec & GCN & GAT & DGI & \textbf{SubGatt}
	\toprule
	\sffamily{Algorithms} & \footnotesize MUTAG&PROTEINS&IMDB-M \\
%    \sffamily{} & & & & Words & Distribution & links \\
    \hline
	\midrule
    DGI&74.73&59.20&36.83\\
    InfoGraph &77.65&59.93&35.93\\
    %\hline
    DIFFPOOL&82.08&60.81&41.72\\
    SubGattPool&90.68&65.45&50.23\\
    \bottomrule
\end{tabular}
\caption{Clustering accuracy(\%).}
\label{tab:graphClus}
\end{table}

\begin{figure*}[h!]
  \centering
  \begin{subfigure}[b]{0.24\linewidth}
    \includegraphics[width=\linewidth]{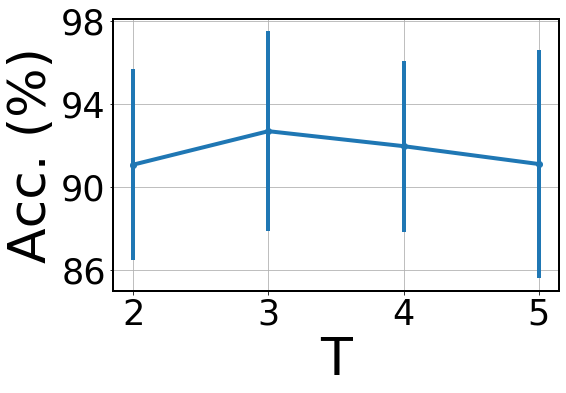}
    \caption{}
    \label{fig:subgraphSize}
  \end{subfigure}
  \begin{subfigure}[b]{0.24\linewidth}
    \includegraphics[width=\linewidth]{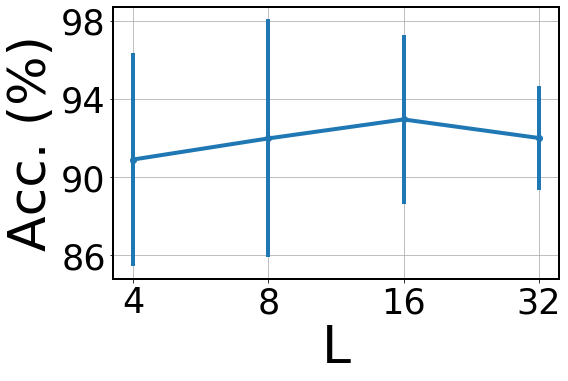}
    \caption{}
    \label{fig:NumSampled}
  \end{subfigure}
  \begin{subfigure}[b]{0.24\linewidth}
    \includegraphics[width=\linewidth]{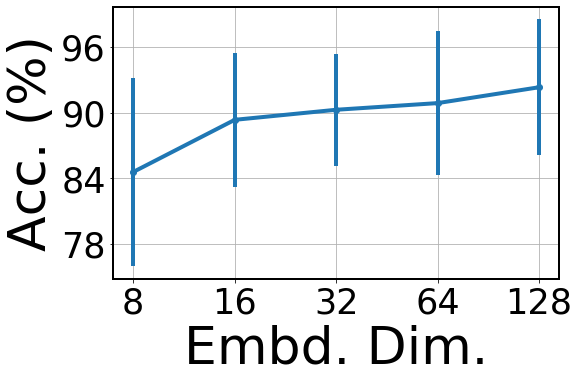}
    \caption{}
    \label{fig:EmbDim}
  \end{subfigure}
  \begin{subfigure}[b]{0.24\linewidth}
    \includegraphics[width=\linewidth]{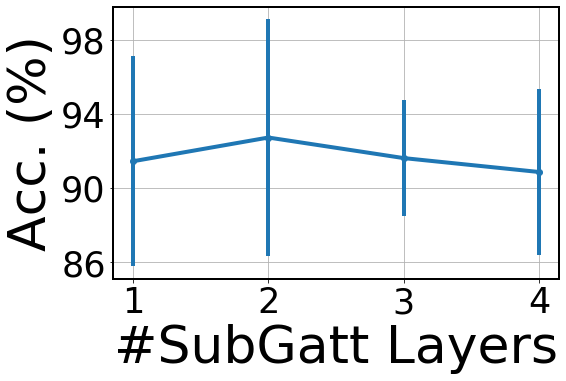}
    \caption{}
    \label{fig:SubGattLayers}
  \end{subfigure}
  \caption{Sensitivity analysis of SubGattPool for graph classification on MUTAG with respect to different hyper-parameters: (a) Maximum subgraph size, (b) Number of subgraphs sampled per epoch for each node, (c) Embedding dimension and (d) Number of SubGatt layers in SubGattPool. }
  \label{fig:sensitySubgatt}
\end{figure*} 

\subsection{Model Ablation Study}\label{sec:ablation}
SubGattPool has mainly two novel components. They are the SubGatt layer, and the intra-level and inter-level attention layers which makes SubGattPool a mixture of both global and hierarchical pooling strategy. 
To see the usefulness of each component, we show the performance after removing that component from SubGattPool. 
We chose graph visualization of MUTAG in Figure \ref{fig:visMUTAG} and graph visualization of PTC in Figure \ref{fig:visPTC} as the downstream tasks for this experiment.  We use t-SNE \cite{vanDerMaaten2008} to convert the graph embeddings into two dimensional plane. Different colors represent different labels of the graphs and the performance is better when different colors form different clusters in the plot.
We choose DIFFPOOL as the base model in Figure \ref{fig:vizDiffpool} because it is also a hierarchical graph representation technique.
In Figure \ref{fig:viz-subgatt}, we replace the SubGatt embedding and pooling layers by GIN embedding and pooling layers in SubGattPool (refer Figure \ref{fig:SubGattPool}). Similarly, in Figure \ref{fig:viz-IIL}, we remove inter and intra layer attention and obtain the graph representation from the last level graph (by creating only one node there) in SubGattPool. Finally, Figure \ref{fig:vizSubGattPool} shows the performance by SubGattPool, which combines all these components into a single network. Clearly, the performances in Figure \ref{fig:viz-subgatt} and \ref{fig:viz-IIL} are better than that in Figure \ref{fig:vizDiffpool}, but the best performance is achieved in Figure \ref{fig:vizSubGattPool} which uses the complete SubGattPool network on MUTAG. The same observation of improved performance of the variants of SubGattPool over DIFFPOOL and the performance of SubGattPool over its variants is also prominent in Figure \ref{fig:visPTC} on PTC dataset. This clearly shows the individual and combined usefulness of various components of SubGattPool for graph representation.

\subsection{Sensitivity Analysis}\label{sec:sensitivity}
%We also conduct sensitivity analysis of SubGattPool w.r.t. all the important hyperparameters, as explained in Figure \ref{fig:sensitySubgatt}. It turns out that SubGattPool is reasonably stable with respect to them.
%\subsection{Sensitivity Analysis of SubGatt Network}
We aim to conduct sensitivity analysis of the proposed algorithm in this section. SubGatt network has four important hyperparameters. They are: (i) Maximum size of a subgraph ($T$), (ii) Number of subgraphs sampled per node in each epoch ($L$) and (iii) Dimension of the final node representation or embedding ($K$) (See Eq. \ref{eq:nodeEmb}) and (iv) Number of SubGatt layers used in the network. We conduct graph classification experiment on MUTAG to see the sensitivity of SubGattPool with respect to each of these hyperparameters. Figure \ref{fig:sensitySubgatt} shows the variation of the performance of SubGattPool network for graph classification with respect to all these hyper-parameters. We have shown both average graph classification accuracy and standard deviation over 10 repetitions for each experiment.

From Figure \label{fig:subgraphSize}, we can see that the performance of SubGattPool on MUTAG improves when maximum length of subgraph is set to 3. As the average size of a graph in MUTAG is quite small, a subgraph of size more than 3 does not help. Similarly, Figure \label{fig:NumSampled} shows that with increasing number of samples ($L$) for each node in an epoch, the performance of SubGattPool improves first, and then saturates. The same observation can be made in Figure \label{fig:EmbDim} for embedding dimension of the graphs. We use SubGatt as the embedding and pooling layers of the GNN after level graph 1. Figure \label{fig:SubGattLayers} shows that best performance on MUTAG is obtained with 2 layers of SubGatt. Adding more number of layers actually deteriorates the performance because of oversmoothing which is a well-known problem of graph neural networks \cite{luan2019break}.
Overall, the variation is as expected and often less with respect to each hyper-parameter and hence it shows the robustness of SubGattPool. Please note, when we are varying one hyper-parameter of SubGattPool, the values of all other hyper-parameters are fixed to the values mentioned in Section \ref{sec:dsBase}.

\section{Conclusion}\label{sec:dis}
We have proposed a novel GNN based robust graph classification algorithm called SubGattPool which uses higher order attention over the subgraphs of a graph and also addresses some shortcomings of the existing hierarchical graph representation techniques.
We conduct experiments with both real world and synthetic graph datasets on multiple graph-level downstream tasks to show the robustness of our algorithm.
We are also able to improve the state-of-the-art graph classification performance on four popularly used graph datasets. In future, we would like to theoretically examine the expressiveness power of SubGatt and SubGattPool for node and graph representations respectively. We will also analyze and see the recovery of communities in a graph in the hierarchical structure of SubGattPool. Overall, we believe that this work would encourage further development in the area of hierarchical graph representation and classification.

\balance

%%
%% The next two lines define the bibliography style to be used, and
%% the bibliography file.
\bibliographystyle{ACM-Reference-Format}
\bibliography{sample-base}

%%
%% If your work has an appendix, this is the place to put it.
%\appendix

% \section{Research Methods}

\end{document}